\documentclass[twoside]{article}


\usepackage[accepted]{aistats2024}

\usepackage{
  amsfonts,
  amsmath,
  amsthm,
  booktabs,
  cancel,
  csvsimple,
  enumitem,
  graphicx,
  url,
  wrapfig,
  xcolor,
  xfrac,
}
\usepackage[round,sort&compress]{natbib}
\usepackage[font=small,labelfont=bf,labelsep=quad,justification=justified,singlelinecheck=false]{caption}
\usepackage[font=small,labelfont=bf,labelsep=quad,singlelinecheck=true]{subcaption}
\usepackage[hyperfootnotes=false]{hyperref}
\usepackage[capitalise,nameinlink,noabbrev]{cleveref}

\newcommand{\email}[1]{\href{mailto:#1}{\nolinkurl{#1}}}

\newcommand{\shorturl}[1]{\href{https://#1}{\nolinkurl{#1}}}

\hypersetup{
  colorlinks,
  linkcolor={blue!50!black},
  citecolor={blue!50!black},
  urlcolor={blue!50!black},
}

\creflabelformat{equation}{#2\textup{#1}#3}

\bibliographystyle{apalike_ampersand_etal}

\setlist[enumerate]{itemsep=2pt,topsep=0pt}
\setlist[itemize]{itemsep=2pt,topsep=0pt}

\newlength{\figureheight}
\setlength{\figureheight}{3.6cm}

\setlength{\tabcolsep}{4pt}

\usepackage{etoolbox}
\patchcmd{\toprule}{\heavyrulewidth}{0.5pt}{}{}
\patchcmd{\bottomrule}{\heavyrulewidth}{0.5pt}{}{}

\usepackage{array}
\newcolumntype{t}{>{\ttfamily}r}

\newcommand{\entropy}[1]{\mathrm{H} \! \left[ #1 \right]}
\newcommand{\mutualinfo}[1]{\mathrm{I} \! \left( #1 \right)}
\newcommand{\expectation}[2]{\mathbb{E}_{#1} \! \left[ #2 \right]}

\newcommand{\dpool}[0]{\mathcal{D}_\mathrm{pool}}

\newcommand{\dtrain}[0]{\mathcal{D}_\mathrm{train}}

\newtheoremstyle{proposition}  
    {10pt}  
    {0pt}  
    {\itshape}  
    {}  
    {\bfseries}  
    {\ }  
    { }  
    {}  
\theoremstyle{proposition}
\newtheorem{proposition}{Proposition}

\makeatletter
\renewenvironment{proof}[1][\proofname]{\par\pushQED{\qed}\normalfont\topsep6\p@\@plus6\p@\relax\trivlist\item[\hskip\labelsep\bfseries#1\ ]\ignorespaces}{\popQED\endtrivlist\@endpefalse}
\makeatother


\begin{document}

\raggedbottom

\twocolumn[
  \aistatstitle{Making Better Use of Unlabelled Data in Bayesian Active Learning}
  \aistatsauthor{Freddie Bickford Smith \And Adam Foster \And  Tom Rainforth}
  \aistatsaddress{University of Oxford \And Microsoft Research \And University of Oxford}
]

\begin{abstract}
  Fully supervised models are predominant in Bayesian active learning.
We argue that their neglect of the information present in unlabelled data harms not just predictive performance but also decisions about what data to acquire.
Our proposed solution is a simple framework for semi-supervised Bayesian active learning.
We find it produces better-performing models than either conventional Bayesian active learning or semi-supervised learning with randomly acquired data.
It is also easier to scale up than the conventional approach.
As well as supporting a shift towards semi-supervised models, our findings highlight the importance of studying models and acquisition methods in conjunction.
\end{abstract}

\section{Introduction}
\label{sec:introduction}

Bayesian active learning \citep{gal2017deep,houlsby2011bayesian,kirsch2023advancing,mackay1992evidence,mackay1992information} involves seeking the most informative labels for training a given model.
The model is the basis not only for prediction but also for deciding what data to acquire, with the latter requiring a notion of how the model's uncertainty will change after updating on new data.
Choosing the right model is therefore critical to success.

We make a case against the fully supervised models conventionally used in Bayesian active learning \citep{atighehchian2020bayesian,beluch2018power,bickfordsmith2023prediction,chitta2018large,gal2017deep,houlsby2011bayesian,jeon2020thompsonbald,kirsch2019batchbald,kirsch2023black,kirsch2023speeding,kirsch2023advancing,kirsch2023stochastic,murray2021depth,pinsler2019bayesian,pop2018deep,tran2019bayesian}.
Crucially these models do not account for all the information at hand: they neglect the rich information often conveyed by unlabelled data, which is usually assumed to be cheaply available \citep{lewis1994sequential,settles2012active}.
This neglected information can be critical to both making predictions and estimating the reducible uncertainty present in the model.
As a result the conventional approach can fail to appropriately target the novel information that new labels can provide.

This problem is exacerbated by the common practice of using big models and assuming all the model parameters will be updated after each data-acquisition step.
The consequences of this include significant redundancy in parameter uncertainty and inconsistencies in reducible-uncertainty estimation.
While the former might be addressed by focusing on predictions instead of parameters \citep{bickfordsmith2023prediction}, the latter remains a thorny problem.
The (implicit) priors used in practice can poorly reflect our beliefs about the reducible uncertainty present in the model, and it is usually not possible to accurately approximate the Bayesian updates that formal notions of reducible uncertainty assume \citep{sharma2023bayesian}.
Difficulties like these can pose a barrier to capturing how a model's uncertainty will change as new data is acquired.

Targeting updates in all the model parameters can additionally be computationally demanding.
Even when resorting to crude inference approximations or heuristics, updating all the model parameters after each data-acquisition step tends to be costly, and estimating the model's reducible uncertainty can also be challenging.
These high costs limit the applicability of conventional Bayesian active learning and often necessitate additional approximations \citep {kirsch2019batchbald}.

We argue that all these problems can be addressed by rethinking the conventional model setup with an emphasis on the specific needs of Bayesian active learning.
In particular we suggest the field should shift towards using semi-supervised models that can capitalise on unlabelled data while also producing useful estimates of their reducible uncertainty.

\begin{figure*}[t]
    \centering
    \includegraphics[height=\figureheight]{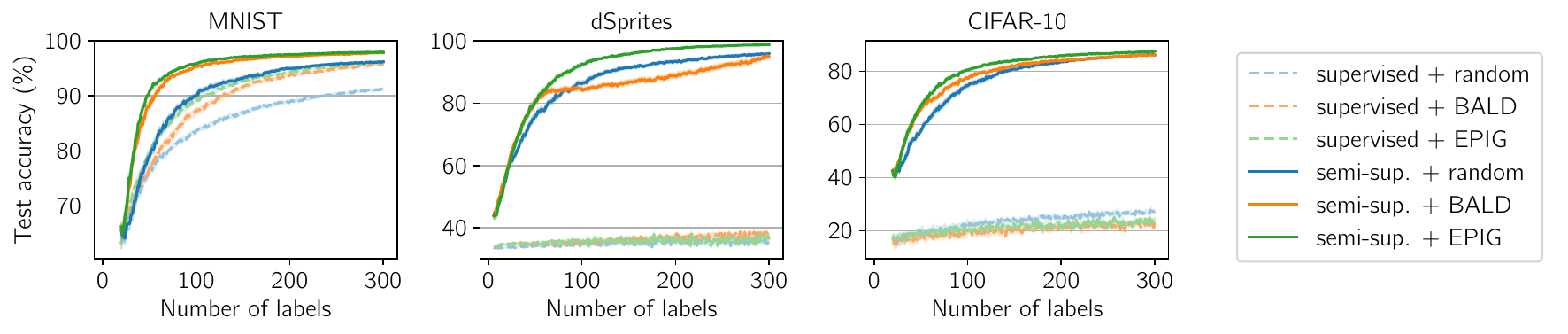}
    \caption{
        Semi-supervised models produce much better predictions than fully supervised ones.
        At the same time, using the right acquisition method is critical for effective active learning.
        BALD, which targets direct reductions in parameter uncertainty, does not consistently outperform random acquisition here; EPIG, a prediction-oriented alternative, does.
    }
    \label{fig:pretraining_helps}
\end{figure*}

To this end we propose a simple framework: pretrain a deterministic encoder on unlabelled data, then train a stochastic prediction head using Bayesian active learning.
The encoder (which we choose to keep fixed) allows us to incorporate relevant information from the unlabelled data into the model, and contains much of the model's processing power.
The prediction head is then a more lightweight component that captures label information and that, relative to the conventional model setup, supports more faithful estimation of the model's reducible uncertainty by allowing cheaper and more accurate updates as new data is acquired.

Our approach can produce notably better predictive performance than the conventional alternative of using fully supervised models (see \Cref{fig:pretraining_helps}).
It also significantly reduces the computational cost of training and acquisition.
As a result it opens up new practical use cases, including scaling up to high-dimensional input spaces and large pools of unlabelled data.

Given its simplicity and benefits, why is this approach not already the default in Bayesian active learning?
One explanation is an implicit assumption in past work that acquisition methods can be studied in isolation from considerations about the model.
We suggest this assumption is misguided: accounting for unlabelled data has a fundamental effect on what labels should be acquired.
Another explanation is a degree of ambiguity in the literature about the benefit of active learning, Bayesian or otherwise, when using semi-supervised models (see \Cref{sec:related_work}).
We show that using the right acquisition method is key to the success of semi-supervised Bayesian active learning.
In particular we find that EPIG \citep{bickfordsmith2023prediction}, a prediction-oriented method, produces more consistent gains over random acquisition than BALD \citep{houlsby2011bayesian}, a parameter-oriented method, does.

Our contribution is thus threefold.
First, we explain why the fully supervised models predominantly used in Bayesian active learning can undermine data acquisition.
Second, we set out a framework for using semi-supervised models instead, and show empirically that this outperforms training the same models on randomly acquired labels.
Third, we provide some clarity on how the field can move forward: it should embrace semi-supervised models and carefully study the interplay between models and acquisition methods.
\section{Background}
\label{sec:background}

We focus on learning a probabilistic predictive model, $p_{\phi}(y|x)$, where $x$ is an input and $y$ is a label.
We assume there are some underlying stochastic model parameters, $\theta \sim p_\phi(\theta)$, defined such that
\begin{align*}
    p_{\phi}(y|x)             & = \expectation{p_{\phi}(\theta)}{p_{\phi}(y|x,\theta)}                             \\
    p_{\phi}(y_1,y_2|x_1,x_2) & = \expectation{p_{\phi}(\theta)}{p_{\phi}(y_1|x_1,\theta)p_{\phi}(y_2|x_2,\theta)}
    .
\end{align*}
\textbf{Active learning} \citep{atlas1989training,settles2012active} is the process of training a model on data selected by an algorithm.
In the context of supervised learning the algorithm decides which labels to acquire, commonly over a series of steps.
Each step, $t$, comprises three substeps.
First, the algorithm selects a query input, $x_t$, or a batch of inputs, often by optimising an acquisition function.
In pool-based active learning \citep{lewis1994sequential}, the setting we focus on in this work, the set of candidate inputs is a fixed pool, $\dpool = \{x_i\}_{i=1}^{N}$.
Second, the algorithm obtains a label, $y_t$, from the true conditional label distribution, $p(y|x=x_t)$, and adds $(x_t,y_t)$ to the training dataset, $\dtrain$.
Third, the model, $p_{\phi}(y|x)$, is updated on $\dtrain$.

\textbf{Bayesian experimental design} \citep{chaloner1995bayesian,lindley1956measure,rainforth2024modern} is a rigorous framework for identifying what data we expect to yield the most information gain in a quantity of interest, $\psi$.
Given a joint distribution, $p(y,\psi|x)$, the information gain in $\psi$ due to new data, $(x,y)$, is the reduction in Shannon entropy in $\psi$ that results from updating on $(x,y)$.
For a given $x$ the expected information gain over possible realisations of $y$ is
\begin{align*}
    \mathrm{EIG}_{\psi}(x) = \expectation{p_\psi(y|x)}{\entropy{p(\psi)} - \entropy{p(\psi|x,y)}}
\end{align*}
where $p_\psi(y|x) = \expectation{p(\psi)}{p(y|x,\psi)}$ is the marginal predictive distribution and $p(\psi|x,y) \propto p(y,\psi|x)$ is the posterior that results from a Bayesian update on $(x,y)$.
This quantity can be recognised as $\mutualinfo{\psi;y|x}$, the mutual information between $\psi$ and $y$ given $x$.

\textbf{Bayesian active learning} \citep{gal2017deep,houlsby2011bayesian,kirsch2023advancing,mackay1992evidence,mackay1992information} is a form of active learning that uses acquisition functions derived from the framework of Bayesian experimental design.
Traditionally it has targeted information gain in the model parameters, $\theta$, by using the BALD score \citep{houlsby2011bayesian}, $\mathrm{BALD}(x) = \mathrm{EIG}_{\theta}(x)$, as the acquisition function.
Recent work \citep{bickfordsmith2023prediction} noted the typically prediction-oriented nature of machine learning and showed the benefit of instead targeting information gain in $y_*$, the model's prediction on a random target input, $x_*$.
This yields the expected predictive information gain, $\mathrm{EPIG}(x)=$
\begin{align*}
    \expectation{p_*(x_*) p_\phi(y|x)}{\entropy{p_\phi(y_*|x_*)} - \entropy{p_\phi(y_*|x_*,x,y)}}
    .
\end{align*}
This is equivalent to $\mutualinfo{(x_*,y_*);y|x}$, the mutual information between $(x_*,y_*)$ and $y$ given $x$.
Whereas BALD targets a global reduction in parameter uncertainty, EPIG targets only reductions in parameter uncertainty that correspond to reduced predictive uncertainty on $x_*$.
As an expected value over target inputs, EPIG explicitly accounts for the input distribution.

\textbf{Unsupervised pretraining} \citep{bengio2006greedy,bommasani2021opportunities,hinton2006fast,ranzato2006efficient} is a way to distil information from unlabelled data into a model, with the aim of improving the model's performance on downstream tasks.
It is common practice to construct a model composed of two parts: an encoder that maps a given input to a latent representation, and a prediction head that maps the latent representation to a predictive distribution.
The encoder is trained on unlabelled data (pretraining), then either the whole model or just the prediction head is trained on labelled data (finetuning).
This two-step training scheme produces a semi-supervised model, incorporating both labelled and unlabelled data.
\section{Problems with current models}
\label{sec:problem}

Unlabelled data can provide a significant amount of information helpful for downstream predictive tasks, as demonstrated by cases like few-shot learning in vision models \citep{chen2020big} and in-context learning in language models \citep{brown2020language}.
A key aspect of this information relates to the underlying input manifold, knowledge of which lets the model make better similarity judgements between inputs and thus produce more appropriate predictive correlations.
Such correlations are crucial to downstream uncertainty estimation \citep{osband2022neural,osband2022evaluating,wang2021beyond}.

The fully supervised models widely used in Bayesian active learning---examples include supervised deep ensembles \citep{beluch2018power,chitta2018large} and supervised dropout networks \citep{atighehchian2020bayesian,gal2017deep,kirsch2019batchbald,kirsch2023speeding}---cannot directly capture the information present in unlabelled data, as inputs without labels do not convey information under such models.
A consequence of this is well documented in multiple studies of other active-learning approaches (see \Cref{sec:related_work}): the model suffers a direct reduction in predictive performance.

Another consequence, one specific to Bayesian active learning, has been overlooked in past work: not accounting for unlabelled data undermines our estimation of the model's reducible uncertainty and therefore poses a challenge to acquiring informative labels.
One perspective on this is that the model's predictive correlations are much worse than they would be if unlabelled data were incorporated.
An alternative (and related) perspective is that failing to incorporate known relevant information into the model from the start leads to reducible uncertainty in the model that need not exist, which in turn leads to spending precious labels gathering the very same information.

The problem here is particularly pertinent when using big models and assuming all the model parameters will be updated after each data-acquisition step.
This is partly because big models complicate prior specification.
In principle we can incorporate information from unlabelled data into a fully supervised model by analysing the data and encoding our insights into the prior beliefs that we specify \citep{gelman2013bayesian,kruschke2014doing}.
But working with big models and lots of unlabelled data often makes this infeasible \citep{fortuin2021priors,tran2022all,wenzel2020good}.

On top of the issue of prior specification is that of non-Bayesian updating on new data.
Whenever we use a form of reducible uncertainty, such as BALD or EPIG, we implicitly assume new data will be incorporated using an exact Bayesian update \citep{rainforth2024modern}.
Yet the high parameter counts seen in practice typically necessitate updating schemes that do not match the Bayesian ideal, including those based on inaccurate inference approximations \citep{farquhar2022understanding,sharma2023bayesian}.
This introduces a mismatch between the assumed and actual updates, reducing the correspondence between how informative new data is estimated to be and how informative it actually is.
Targeting the best data can thus become challenging.

A further problem is the computational cost of active-learning steps.
Updates on new data can be expensive even if they are not Bayesian.
Estimating the model's reducible uncertainty for data-acquisition decisions is also potentially costly.
While the extra overall computational cost is often justified if it allows us to more wisely allocate our labelling budget, it might be too great for many practical applications, especially when performing batch acquisition \citep{kirsch2019batchbald}.

\begin{figure}[t!]
    \centering
    \includegraphics[height=3.33\figureheight,trim={0.5cm 0 0 0}]{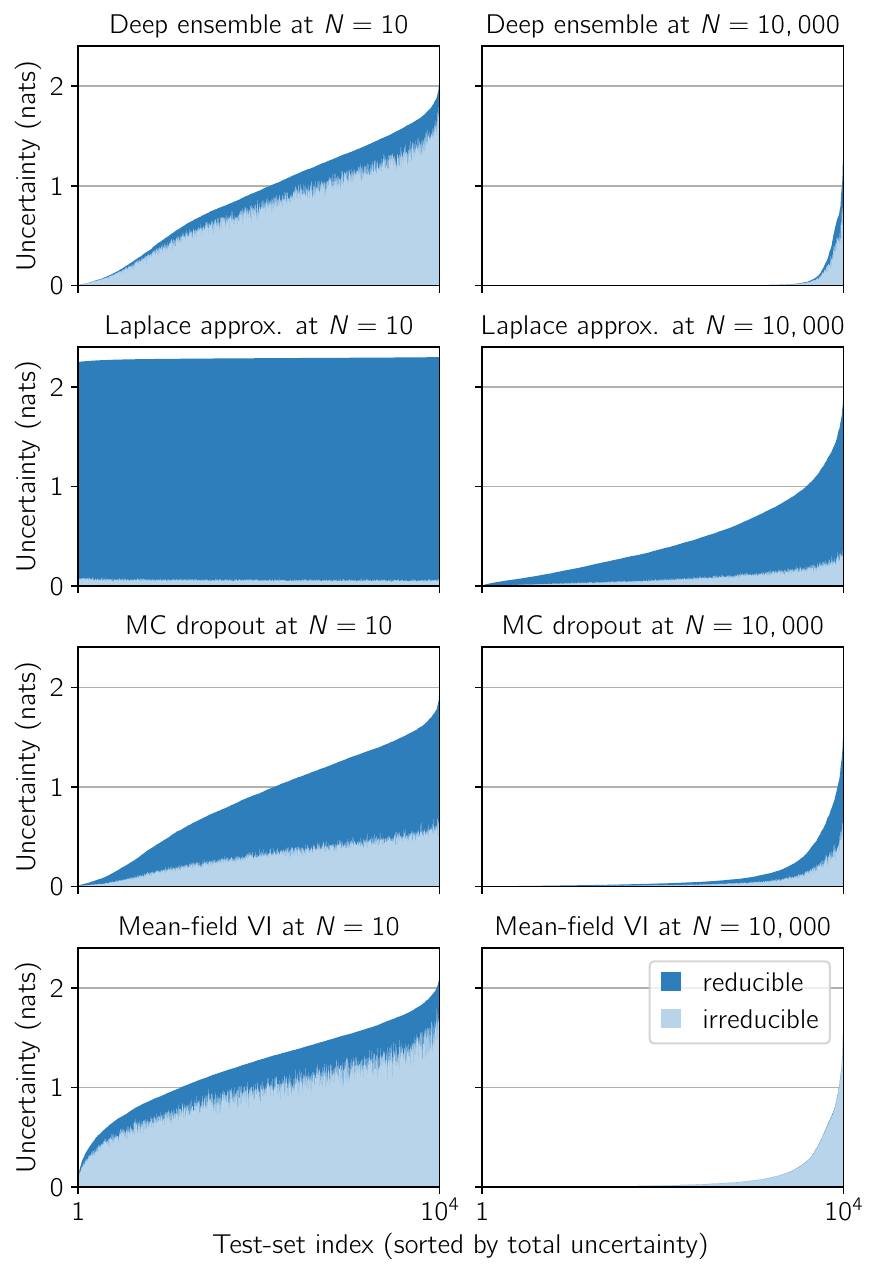}
    \caption{
        Bayesian deep learning can struggle to appropriately decompose a model's estimated uncertainty.
        Most of the uncertainty of the model here is ultimately reducible, as demonstrated by the decrease in total uncertainty between training on a small amount of data ($N=10$) and training on more data ($N=10,000$).
        This suggests the irreducible-uncertainty estimates produced by these methods are inconsistent.
        Using deep ensembles, Monte Carlo dropout and mean-field variational inference, much of the uncertainty deemed to be irreducible at $N=10$ is resolved at $N=10,000$.
        Using Laplace approximation, the irreducible uncertainty increases on many of the test inputs after training on more data.
        See \Cref{sec:experiment_details} for details.
    }
    \label{fig:uncertainty_decomposition_bnn}
\end{figure}

In \Cref{fig:uncertainty_decomposition_bnn} we demonstrate the kind of problem that can occur with commonly used models and inference approximations.
Given this relatively simple MNIST classification problem \citep{lecun1998gradientbased}, we can reasonably believe that with enough training data the neural network of interest will be able to make correct, confident predictions on test inputs \citep{belkin2018overfitting,zhang2017understanding}.
In other words, most of the model's uncertainty can be reduced.
Our results reflect this: across all the methods considered, the model has high total uncertainty on many test inputs after training on a small dataset and then has much lower total uncertainty after training on a bigger one.
This reveals the uncertainty estimates to often be overly pessimistic, with a significant fraction of the model's uncertainty deemed to be irreducible.
Failing to appropriately decompose the model's uncertainty can have a critical effect on Bayesian active learning: it directly affects which inputs are prioritised for labelling.
\section{A semi-supervised approach \break to Bayesian active learning}
\label{sec:method}

We now address how to effectively incorporate unlabelled data into Bayesian active learning.
Our proposal is a simple semi-supervised framework: pretrain a deterministic encoder on unlabelled inputs, then train a stochastic prediction head through Bayesian active learning.
The idea is that the encoder (which we keep fixed) will capture much of the information present in the unlabelled data, while the prediction head will encapsulate the information we gather from labelled data and help guide acquisition by capturing a notion of the model's reducible uncertainty.

\subsection{Semi-supervised model}
\label{sec:semi_supervised_model}

The basis of our framework is to split the predictive model into an encoder, $g$, and a prediction head, $h$:
\begin{align*}
    p_\phi(y|x) = \expectation{p_\phi(\theta_h)}{p_\phi(y|g(x),\theta_h)}
\end{align*}
where $\theta_h \sim p_\phi(\theta_h)$ represents the parameters of the prediction head and $p_\phi(y|g(x),\theta_h)$ is the predictive distribution for a given configuration of $\theta_h$.

The encoder is pretrained on unlabelled data and then treated as a fixed function with no stochastic parameters (see \Cref{sec:fixing_the_encoder}).
Any pretraining method can be used, with a preference for methods that give strong downstream predictive performance.
Because the encoder is deterministic and only needs to be trained once, it can be big and have a lot of processing power while still avoiding the issues discussed in \Cref{sec:problem}.

The prediction head, in contrast, is updated as new labels are acquired and conforms to the setup described in \Cref{sec:background}.
Typically it can be significantly more lightweight than either the encoder or the fully supervised models conventionally used in Bayesian active learning.
This is partly because the amount of unlabelled data typically dwarfs the label budget, and in such settings this model setup supports good predictions \citep{assran2022masked,chen2020big,henaff2020data} and uncertainty estimation \citep{daxberger2021laplace,kristiadi2020being,sharma2023bayesian,snoek2015scalable}.
It is also because using a smaller prediction head allows cheaper and more consistent parameter updates on new data (see \Cref{sec:fixing_the_encoder}).

Importantly the model's reducible uncertainty is now dictated entirely by $p_\phi(\theta_h)$.
For example, BALD, which measures the total reducible uncertainty in the model parameters, can be written as $\mathrm{BALD}(x) = \mutualinfo{(\theta_g,\theta_h);y|x} = \mutualinfo{\theta_h;y|x}$, due to the fact that the encoder parameters, $\theta_g$, are not random.

\subsection{Fixing the encoder}
\label{sec:fixing_the_encoder}

A key decision when using a pretrained encoder is whether to keep it fixed or finetune it along with the prediction head.
We suggest that fixing the encoder is a sensible default in our framework (though occasionally updating it using semi-supervised representation learning might sometimes be helpful).
Our reasons for this draw on insights from Bayesian experimental design along with more practical considerations.

\subsubsection{Update consistency}
\label{sec:update_consistency}

In \Cref{sec:problem} we explained that data-acquisition decisions can be undermined by a mismatch between the exact Bayesian update assumed by formal notions of reducible uncertainty and the actual update made in practice.
Finetuning the encoder thus introduces a difficult technical issue: regardless of how we do it, it substantially increases the scope for a mismatch between assumed and actual updates.
On the one hand we could use a deterministic encoder and update some or all of its parameters in each active-learning step.
Here the update mismatch would be because only distributional updates on the model's stochastic parameters are accounted for when estimating the model's reducible uncertainty.
On the other hand we could use a stochastic encoder and update it through approximate inference.
Here the update mismatch would be due to inaccuracies in the approximation scheme.

The model setup we propose can help mitigate this issue.
It makes consistent updating more viable by focusing updates on the lightweight prediction head.

\subsubsection{Computational cost}
\label{sec:computational_cost}

Bayesian active learning becomes considerably cheaper computationally when we use a powerful fixed encoder.
One reason for this is that we only need to make updates to $p_\phi(\theta_h)$ at each active-learning step, with any associated inference much easier for this small subset of the model parameters than for all of them.
Another key reason is that we reduce the number of encoder forward passes we have to compute: we only need to pass the inputs through the encoder once at the start of active learning to precompute $\{g(x_i)\}_{i=1}^N$, whereas in the conventional setup we need to evaluate the full model each time we use $p_\phi(y|x,\theta)$.

This reduction in computational cost can matter a lot in practice.
Training a powerful predictive model tends to require a lot of computation.
The pretraining step in our framework amortises this computation, shifting it outside the active-learning loop.
We can thus get the benefit of active learning---the ability to spend extra computation to deal with high labelling costs---without introducing big delays as the model's parameter distribution updates or as its uncertainty is estimated.
This opens up use cases that otherwise might be impractical, such as human-in-the-loop labelling.

\subsection{Acquisition function}
\label{sec:acquisition_function}

While it is well established that using semi-supervised models leads to better predictive performance, some past empirical studies have raised doubts about the benefit of active learning in these models (see \Cref{sec:related_work}).
This appears to contradict our argument that incorporating information from unlabelled data should lead to better decisions about what data to acquire.
A partial explanation for this apparent contradiction is the data used in these studies.
The need for active learning is clearest when there is a big disparity between the most and least relevant inputs to label.
Past studies have commonly used carefully curated pools of unlabelled data, which provide an unrepresentative boost to the performance of random acquisition.

An important additional explanation is that previous evaluations of Bayesian active learning have overwhelmingly focused on BALD acquisition \citep{atighehchian2020bayesian,beluch2018power,burkhardt2018semi,chitta2018large,gal2017deep,houlsby2011bayesian,jeon2020thompsonbald,kirsch2019batchbald,kirsch2023black,kirsch2023speeding,kirsch2023stochastic,lee2019baldvae,murray2021depth,osband2023fine,pop2018deep,tran2019bayesian}.
Recent work showed that BALD can exhibit serious pathologies when our aim is to maximise predictive performance, motivating the use of EPIG as an alternative \citep{bickfordsmith2023prediction}.
In our own experiments we often find EPIG to be a key contributor to the success of semi-supervised Bayesian active learning: BALD can perform worse than random acquisition, while EPIG produces a reliable improvement.

Crucial to this success, we believe, is that EPIG directly targets information about downstream predictions.
Adapting a result from \citet{bernardo1979expected} clarifies the relationship between the two acquisition functions.

\begin{proposition}
    \label{thm:bald_epig_inequality}
    $\mathrm{EPIG}(x) \leq \mathrm{BALD}(x)$, with equality if $y_*$ is a one-to-one function of $\theta$ for all $x_*$.
\end{proposition}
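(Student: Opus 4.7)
The plan is to prove this as a data-processing inequality plus an independence argument, closely following Bernardo's original reasoning.

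First I would rewrite both quantities in a common form. BALD is $\mathrm{I}(\theta; y \mid x)$ by definition, and EPIG is $\mathrm{I}((x_*, y_*); y \mid x)$. Since $x_*$ is drawn from $p_*$ independently of $(\theta, y)$, I would chain-rule the latter as $\mathrm{I}((x_*, y_*); y \mid x) = \mathrm{I}(x_*; y \mid x) + \mathrm{I}(y_*; y \mid x, x_*)$, observe that $\mathrm{I}(x_*; y \mid x) = 0$ by independence, and conclude $\mathrm{EPIG}(x) = \mathrm{I}(y_*; y \mid x, x_*)$. On the BALD side, the same independence gives $\mathrm{BALD}(x) = \mathrm{I}(\theta; y \mid x) = \mathrm{I}(\theta; y \mid x, x_*)$.

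Next I would invoke the data-processing inequality. Conditional on $(x, x_*)$, the joint distribution factorises so that $y$ and $y_*$ are independent given $\theta$: $y$ depends on $(x, \theta)$ while $y_*$ depends on $(x_*, \theta)$, via the assumed factorisation $p_\phi(y_1, y_2 \mid x_1, x_2) = \mathbb{E}_{p_\phi(\theta)}[p_\phi(y_1 \mid x_1, \theta) p_\phi(y_2 \mid x_2, \theta)]$ given in the Background. Hence we have the Markov chain $y \to \theta \to y_*$ given $(x, x_*)$, and DPI yields $\mathrm{I}(y_*; y \mid x, x_*) \leq \mathrm{I}(\theta; y \mid x, x_*)$. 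Combining with the rewrites above gives $\mathrm{EPIG}(x) \leq \mathrm{BALD}(x)$.

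For the equality clause, I would argue that if $y_*$ is a one-to-one function of $\theta$ for every $x_*$, then $\theta$ is recoverable from $y_*$ (conditioning on $x_*$), so the Markov chain $y \to \theta \to y_*$ also runs in the other direction as $y \to y_* \to \theta$, making the DPI bound tight. Concretely, $\mathrm{I}(\theta; y \mid x, x_*) = \mathrm{I}(y_*; y \mid x, x_*)$ because $\theta$ and $y_*$ are deterministic functions of each other given $x_*$, so they induce the same $\sigma$-algebra and hence the same mutual information with $y$.

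The main obstacle I expect is a notational rather than conceptual one: being precise about the joint distribution in which all of $\theta$, $y$, $x_*$, $y_*$ live so that the Markov chain and the independence of $x_*$ from $(\theta, y)$ are both unambiguously justified by the two display equations in the Background. Once that joint is written down explicitly, both the inequality and the equality case follow in one or two lines each.
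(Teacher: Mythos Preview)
Your proposal is correct and essentially the same as the paper's proof: both hinge on the conditional independence $y \perp y_* \mid \theta$ (given $x,x_*$), and both reduce $\mathrm{BALD}(x)$ to $\mutualinfo{y;\theta \mid x,x_*}$ via the independence of $x_*$. The only cosmetic difference is packaging: you cite the data-processing inequality directly, whereas the paper unpacks it by applying the chain rule to $\mutualinfo{y;(y_*,\theta)\mid x,x_*}$ in two orders and noting that one cross term vanishes while the other is nonnegative; and the paper treats $x_*$ as fixed and averages at the end, whereas you carry it as a conditioning variable throughout.
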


\begin{proof}
    Using the chain rule for mutual information, we can equate two ways of writing $\mutualinfo{y;y_*,\theta|x,x_*}$:
    \begin{align*}
         & \mutualinfo{y;\theta|x,x_*} + \expectation{p_\phi(\theta)}{\mutualinfo{y;y_*|x,x_*,\theta}}        \\
         & \qquad = \mutualinfo{y;y_*|x,x_*} + \expectation{p_\phi(y_*|x_*)}{\mutualinfo{y;\theta|x,x_*,y_*}}
        .
    \end{align*}
    Now we note that $\mutualinfo{y;y_*|x,x_*,\theta}=0$ because the model's predictions are independent if we condition on $\theta$.
    We also note that $\mutualinfo{y;\theta|x,x_*,y_*}\geq 0$, so
    \begin{align*}
        \mutualinfo{y;\theta|x,x_*} - \mutualinfo{y;y_*|x,x_*}
         & \geq 0.
    \end{align*}
    Taking an expectation over $p(x_*)$ and noting that $\mutualinfo{y;\theta|x,x_*}=\mutualinfo{y;\theta|x}=\mathrm{BALD}(x)$ now gives the desired inequality.
    This in turn gives an equality if $y_*$ is a one-to-one function of $\theta$ for all $x_*$ because that implies $\mutualinfo{y;\theta|x,x_*} = \mutualinfo{y;y_*|x,x_*}$, since the mutual information between two variables is invariant to invertible transformations \citep{cover2005elements}.
\end{proof}

With this we can interpret EPIG as a ``filtered'' version of BALD that only retains information relevant to the model's predictions on the task of interest.
As a result it avoids BALD's pathology of acquiring data that is informative with respect to the model's parameters but not its downstream predictions.
EPIG's role in our framework can therefore be seen as complementary to that of unsupervised pretraining: pretraining provides breadth, capturing much of the information that might be useful, while EPIG provides the ability to hone in on the task of interest.
\section{Related work}
\label{sec:related_work}

Early work on active learning included a demonstration by \citet{mccallum1998employing} of using a semi-supervised model, specifically a naive-Bayes classifier, with query-by-committee data acquisition \citep{seung1992query}.
\citet{muslea2002active} likewise proposed using a semi-supervised naive-Bayes model along with disagreement-based acquisition.
\citet{riccardi2003active}, \citet{tur2005combining} and \citet{tomanek2009semi} considered an approach based on model confidence: select low-confidence inputs for labelling; assign pseudolabels (using the model) to high-confidence inputs and include them in the training data.
\citet{zhu2003combining} studied a Gaussian-random-field binary classifer, using an estimate of the expected loss reduction as their acquisition function.
\citet{nguyen2004active} used a cluster-based binary classifier and decided which labels to acquire based on candidate inputs' estimated contribution to the model's error.

The recent work most directly related to ours is that which includes empirical evaluations of BALD acquisition in combination with semi-supervised models. 
(EPIG has not previously been tested with such models.)
The findings of the evaluations have been mixed: some show BALD outperforming random acquisition while others show it performing the same or worse.

Some of the most positive results have been found on language datasets.
\citet{burkhardt2018semi} used a semi-supervised variational autoencoder and BALD acquisition in the context of classifying social-media posts, and demonstrated benefits both from incorporating unlabelled data and from active data acquisition.
\citet{seo2022active} combined unsupervised pretraining with active learning and found BALD gave consistent gains over random acquisition on relation-extraction and sentence-classification tasks.
\citet{osband2023fine} studied a similar method on a range of text-classification tasks and saw BALD perform strongly.

BALD has performed less convincingly in computer-vision evaluations.
\citet{mittal2023best} reported positive results on semantic-segmentation datasets.
But \citet{sener2018active}, \citet{hacohen2022active}, \citet{yehuda2022active} and \citet{luth2023toward} found BALD sometimes performed better and other times worse than random acquisition on image-classification datasets.

\begin{figure*}[t]
    \centering
    \includegraphics[height=\figureheight,trim={1cm 0 0 0}]{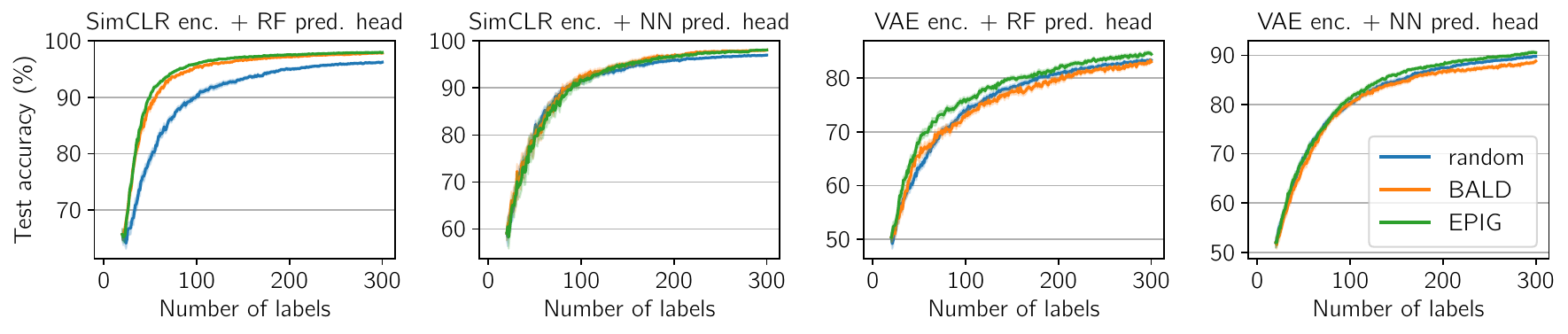}
    \caption{
        EPIG's boost over random acquisition is relatively robust to the choice of encoder and prediction head in semi-supervised models applied to MNIST data.
        BALD only convincingly beats random for two out of four configurations.
    }
    \label{fig:encoder_and_head_matter}
\end{figure*}

In the broader literature there has also been a degree of ambiguity about the benefit of active learning (relative to training on randomly acquired data) when using semi-supervised models.
A number of methods for semi-supervised active learning have been found to beat non-active counterparts in evaluations focusing on computer vision
\citep{bhatnagar2020pal,chu2022active,ebrahimi2020minimax,gao2020consistency,gudovskiy2020deep,guo2016semi,guo2021redundancy,guo2021semi,hacohen2022active,huang2021semi,luth2023toward,margatina2022importance,mittal2023best,mohamadi2022deep,mottaghi2019adversarial,pourahmadi2021simple,rhee2017active,rottmann2018deep,sener2018active,shui2020deep,tamkin2022active,wang2020dual,yang2015multi,yehuda2022active,yi2022using,zhang2020state,zhang2024labelbench}, natural-language processing \citep{eindor2020active,maekawa2022low,seo2022active,tamkin2022active,yuan2020cold}, molecular-property prediction \citep{hao2020asgn,zhang2019bayesian} and speech recognition \citep{drugman2016active}.
Methods notable for their simplicity and their strong performance on standard problems include three that use unsupervised pretraining and that base data acquisition on different approaches to covering the input space: ProbCover \citep{yehuda2022active}, TypiClust \citep{hacohen2022active} and a method based on $k$-means clustering \citep{pourahmadi2021simple}.

Alongside the studies reporting positive results have been several casting doubt on how helpful active learning is when using a semi-supervised model.
\citet{bengar2021reducing}, \citet{chan2021marginal}, \citet{mittal2019parting} and \citet{simeoni2020rethinking} all evaluated multiple acquisition methods on multiple image-classification datasets and found no acquisition method reliably outperformed random acquisition once unlabelled data had been incorporated into the model.
Likewise none of the acquisition methods tried by \citet{gleave2022uncertainty} improved over random acquisition in the context of text classification using a pretrained model.
\section{Experiments}
\label{sec:experiments}

To help resolve some of the ambiguity in the literature and to validate our proposed framework, we evaluated variants of Bayesian active learning on a range of tasks.
We focused on image classification, in line with much of the literature on foundational active-learning methodology (see \Cref{sec:related_work}).
Details about the data, models and active-learning setup we used are given in \Cref{sec:experiment_details}.
Code for reproducing our results is available at \shorturl{github.com/fbickfordsmith/epig}.

\subsection{Semi-supervised models outperform fully supervised ones but BALD is unreliable}
\label{sec:pretraining_helps}

We first set out to understand the interplay between semi-supervised models and BALD-based active learning, since past work leaves it unclear what to expect from the combination (see \Cref{sec:related_work}).
Using the MNIST \citep{lecun1998gradientbased}, dSprites \citep{matthey2017dsprites} and CIFAR-10 \citep{krizhevsky2009learning} datasets, we evaluated BALD-based active learning with fully supervised and semi-supervised models.

We find that using a semi-supervised model instead of a fully supervised one leads to better predictive performance, but also that BALD does not reliably help on top of this (see \Cref{fig:pretraining_helps}).
This finding is in line with results from multiple past studies.
A plausible explanation is that, despite the model incorporating information from unlabelled data and supporting better uncertainty estimation, the type of uncertainty reduction that BALD targets causes irrelevant data to be acquired.
The results might thus indicate a need for more targeted information-seeking.

\subsection{EPIG produces a reliable boost over both BALD and random acquisition}
\label{sec:epig_helps}

We next investigated whether acquiring data using EPIG addresses the problems associated with using BALD.
Returning to the datasets and semi-supervised models from before, we ran active learning with EPIG.

Our results show EPIG convincingly outperforming random acquisition in cases where BALD fails to (see \Cref{fig:pretraining_helps}).
Referring to our discussion in \Cref{sec:method}, we can understand this to be a result of EPIG being more specific in the information it favours, with the pretrained encoder supporting effective judgements of similarity between inputs and thus helping to target information relevant to downstream prediction.

EPIG's advantage over BALD is also clear when we consider their sensitivity to changes in the semi-supervised model.
Revisiting MNIST with changes to the encoder and prediction head, we see in \cref{fig:encoder_and_head_matter} that BALD's performance is even shakier than the results in \Cref{fig:pretraining_helps} let on: it does not convincingly outperform random acquisition when we switch from the SimCLR encoder to the VAE encoder.
EPIG shows greater robustness to changes in the model.

\subsection{The semi-supervised approach works with messy data and at scale}
\label{sec:class_imbalance_matters}

We then evaluated our approach in scenarios that capture aspects of working with messy data in practical applications, contrasting with the relatively well-curated and small-scale datasets from before.
To this end we first looked at ``redundant'' variants of MNIST and CIFAR-10, in which the pool of unlabelled data contains many more classes than the ones we want to discriminate between \citep{bickfordsmith2023prediction}.

The strong performance we see on Redundant MNIST and Redundant CIFAR-10 (see \Cref{fig:class_imbalance_matters}) has important practical implications.
Modern machine learning often entails using messy pools of unlabelled data---for example, huge collections of audio, images and text \citep{ardila2020common,gemmeke2017audio,mahajan2018exploring,radford2021learning,raffel2020exploring,sun2017revisiting}---where many of the inputs have little relevance to the task we are ultimately interested in.
Our framework provides a way to make the most of these resources while overcoming the challenges they pose.
Through unsupervised pretraining we can capitalise on the information encoded in unlabelled inputs, and through EPIG-based data acquisition we can avoid the trap of gathering labels for obscure inputs that have limited relevance to the task of interest.

\begin{figure}[t]
    \centering
    \includegraphics[height=\figureheight, trim={0.2cm 0 0 0}]{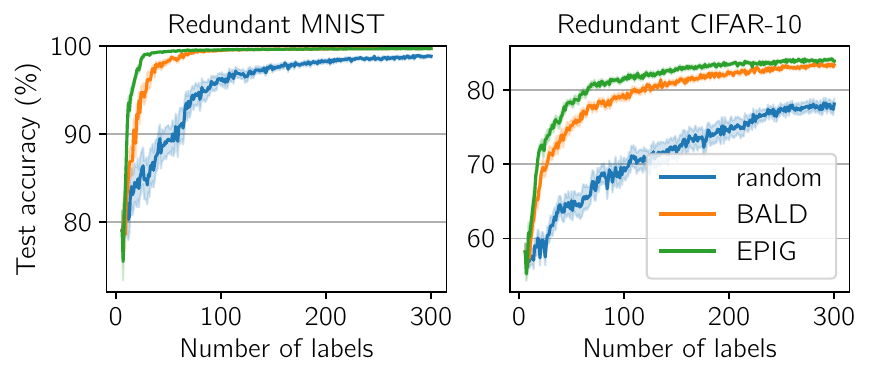}
    \caption{
        BALD and EPIG provide especially big gains in predictive performance relative to random acquisition when the pool of unlabelled data is not carefully curated.
    }
    \label{fig:class_imbalance_matters}
\end{figure}
\begin{figure}[t]
    \centering
    \includegraphics[height=\figureheight,trim={0.2cm 0 0 0}]{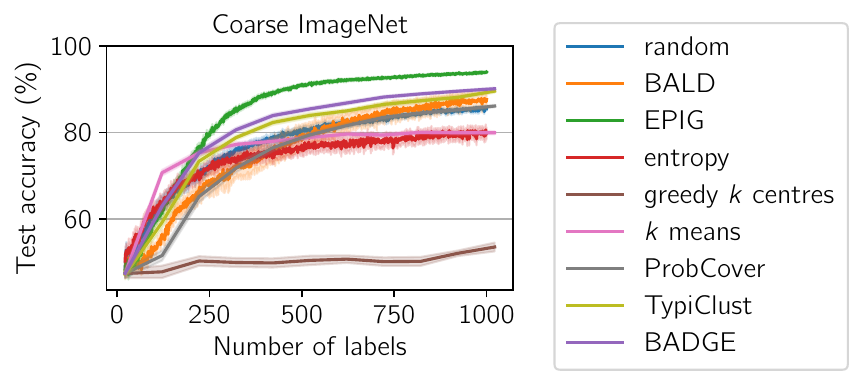}
    \caption{
        Semi-supervised Bayesian active learning performs strongly on ImageNet data, with EPIG producing a particularly notable boost over random acquisition.
    }
    \label{fig:imagenet_bayesian}
\end{figure}

We finally focused on scaling up to higher-dimensional input spaces.
In particular we looked at classifying ImageNet images into eleven superclasses based on the WordNet semantic hierarchy \citep{engstrom2019robustness,miller1995wordnet}: bird, boat, car, cat, dog, fruit, fungus, insect, monkey, truck and other.
For a better sense of context we evaluated six extra acquisition methods: maximum entropy \citep{settles2008analysis}, BADGE \citep{ash2020deep}, greedy $k$ centres \citep{sener2018active}, $k$ means \citep{pourahmadi2021simple}, ProbCover \citep{yehuda2022active} and TypiClust \citep{hacohen2022active}.
As noted in \Cref{sec:related_work}, the last three of these methods have been found to perform well in past work.
For example, ProbCover's reported performance on ImageNet data is the best we are aware of.

\Cref{fig:imagenet_bayesian} shows our approach providing outstanding performance here, to our knowledge the strongest yet observed for Bayesian active learning.
Even when compared with state-of-the-art baselines, it provides a substantial boost in test accuracy.
This demonstrates the possibility of scaling up information-theoretic data acquisition to challenging, high-dimensional inputs.
Notably this is possible provided an appropriate model and acquisition strategy are used: as before, our use of EPIG is key in achieving such strong results.
Helping to explain these gains is a notable difference in class distributions in the labelled datasets produced by random, BALD and EPIG acquisition (see \Cref{fig:imagenet_acquisitions}).

\subsection{Semi-supervised models allow much faster Bayesian active learning}
\label{sec:speedups}

A key practical finding is how much faster Bayesian active learning becomes when we use the semi-supervised approach (see \Cref{tab:speedups}).
This validates the argument we presented in \Cref{sec:fixing_the_encoder}: unsupervised pretraining gives us a powerful encoder that we can adapt to a task of interest using a lightweight prediction head, keeping the encoder fixed.
This setup allows a drastic speedup, which could unlock new practical use cases.

\begin{table}[t]
    \centering
    \scriptsize
    \begin{tabular}{lrrrr}
        \toprule
        Dataset  & Time (sup.)   & Time (semi) & Speedup    & Enc. time \\
        \midrule
        MNIST    & 32 sec        & 17 sec      & 2$\times$  & 2 msec    \\
        dSprites & 4 min 17 sec  & 16 sec      & 16$\times$ & 6 msec    \\
        CIFAR-10 & 42 min 14 sec & 29 sec      & 89$\times$ & 75 msec   \\
        \bottomrule
    \end{tabular}
    \caption{
        Semi-supervised Bayesian active learning is much faster than the conventional fully supervised approach, with the speedup increasing as we scale to harder datasets and use more powerful encoders.
        We report the mean duration of an active-learning step with EPIG acquisition, and the speedup from using the semi-supervised approach.
        To give an indication of encoder power, we report the time taken to encode a batch of 256 inputs (mean across seven runs on an Nvidia GeForce RTX 2080 Ti).
    }
    \label{tab:speedups}
\end{table}
\begin{figure*}[t]
    \centering
    \includegraphics[height=\figureheight]{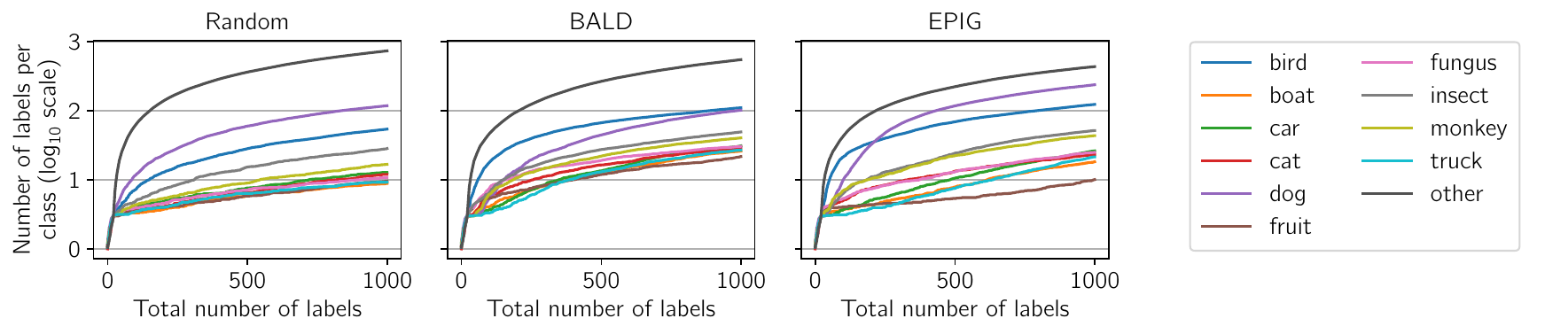}
    \caption{
        The labelled datasets generated by random, BALD and EPIG acquisition are notably distinct in their class distributions.
        Random matches the class distribution of the pool, while both BALD and EPIG differentially upweight the non-``other'' classes.
        We report the acquisitions corresponding to \Cref{fig:imagenet_bayesian} (mean label counts across random seeds).
    }
    \label{fig:imagenet_acquisitions}
\end{figure*}
\section{Conclusion}
\label{sec:conclusion}

We have shown that the fully supervised models predominantly used in Bayesian active learning are problematic.
In particular, by not accounting for the information present in unlabelled data, they undermine the pursuit of useful novel information.
We have identified a simple semi-supervised framework as an appealing solution.
While previous work has seen mixed results on the benefits of active learning with semi-supervised models, we have shown that using EPIG, a prediction-oriented acquisition function, can consistently provide improvements over random acquisition.

\vspace{10pt}

Our findings suggest a crucial update for the field.
Work on Bayesian active learning has largely been disconnected from that on semi-supervised learning.
This implicitly assumes that results from studying fully supervised models cleanly translate to semi-supervised models.
Empirical results from both prior work and our own show this assumption to be misguided.
At the same time, obtaining useful methodological insights requires studying the tools practitioners actually use, and few practitioners would neglect to make good use of unlabelled data.
Research on Bayesian active learning should embrace a shift to semi-supervised models.
\section*{Acknowledgements}

We thank Andreas Kirsch, Arnaud Doucet, Brooks Paige, Desi Ivanova, Jannik Kossen, Lewis Smith, Mike Osborne, Mrinank Sharma, Seb Farquhar and Yarin Gal for useful discussions and feedback; Ben H{\"o}ltgen, Jan Brauner and S{\"o}ren Mindermann for sharing related internal results; and the anonymous reviewers of this paper for their feedback.
Freddie Bickford Smith is supported by the EPSRC Centre for Doctoral Training in Autonomous Intelligent Machines and Systems (EP/L015897/1).
Tom Rainforth is supported by the UK EPSRC grant EP/Y037200/1.

\bibliography{references}

\clearpage

\section*{Checklist}

\begin{enumerate}
    \item For all models and algorithms presented, check if you include:
          \begin{enumerate}
              \item A clear description of the mathematical setting, assumptions, algorithm, and/or model.
                    \textit{Yes. See \Cref{sec:background,sec:method}.}
              \item An analysis of the properties and complexity (time, space, sample size) of any algorithm.
                    \textit{Yes. See \Cref{tab:speedups}.}
              \item (Optional) Anonymized source code, with specification of all dependencies, including external libraries.
                    \textit{Yes. See link to code in \Cref{sec:experiments}.}
          \end{enumerate}

    \item For any theoretical claim, check if you include:
          \begin{enumerate}
              \item Statements of the full set of assumptions of all theoretical results.
                    \textit{Yes. See \Cref{thm:bald_epig_inequality}.}
              \item Complete proofs of all theoretical results.
                    \textit{Yes. See \Cref{thm:bald_epig_inequality}.}
              \item Clear explanations of any assumptions.
                    \textit{Yes. See \Cref{thm:bald_epig_inequality}.}
          \end{enumerate}

    \item For all figures and tables that present empirical results, check if you include:
          \begin{enumerate}
              \item The code, data, and instructions needed to reproduce the main experimental results (either in the supplemental material or as a URL).
                    \textit{Yes. See link to code in \Cref{sec:experiments}.}
              \item All the training details (e.g., data splits, hyperparameters, how they were chosen).
                    \textit{Yes. See \Cref{sec:experiment_details}.}
              \item A clear definition of the specific measure or statistics and error bars (e.g., with respect to the random seed after running experiments multiple times).
                    \textit{Yes. See \Cref{sec:experiment_details}.}
              \item A description of the computing infrastructure used. (e.g., type of GPUs, internal cluster, or cloud provider).
                    \textit{Yes. See \Cref{sec:resources}.}
          \end{enumerate}

    \item If you are using existing assets (e.g., code, data, models) or curating/releasing new assets, check if you include:
          \begin{enumerate}
              \item Citations of the creator, if your work uses existing assets.
                    \textit{Yes. See \Cref{sec:resources}.}
              \item The license information of the assets, if applicable. \textit{Yes. See \Cref{sec:resources}.}
              \item New assets either in the supplemental material or as a URL, if applicable.
                    \textit{Yes. See link to code in \Cref{sec:experiments}.}
              \item Information about consent from data providers/curators.
                    \textit{Not applicable.}
              \item Discussion of sensible content if applicable, e.g., personally identifiable information or offensive content.
                    \textit{One of our experiments is based on ImageNet data.
                        Some images in this dataset depict people, raising concerns with respect to issues such as demographic representation and privacy.
                        We support work that aims to address these and other ethical concerns, for example the work of \citet{asano2021pass} and \citet{yang2021study}.}
          \end{enumerate}

    \item If you used crowdsourcing or conducted research with human subjects, check if you include:
          \begin{enumerate}
              \item The full text of instructions given to participants and screenshots.
                    \textit{Not applicable.}
              \item Descriptions of potential participant risks, with links to Institutional Review Board (IRB) approvals if applicable.
                    \textit{Not applicable.}
              \item The estimated hourly wage paid to participants and the total amount spent on participant compensation.
                    \textit{Not applicable.}
          \end{enumerate}
\end{enumerate}

\clearpage

\appendix
\onecolumn

\section{Extra results}
\label{sec:extra_results}

\begin{figure*}[h]
    \centering
    \includegraphics[height=1.85\figureheight]{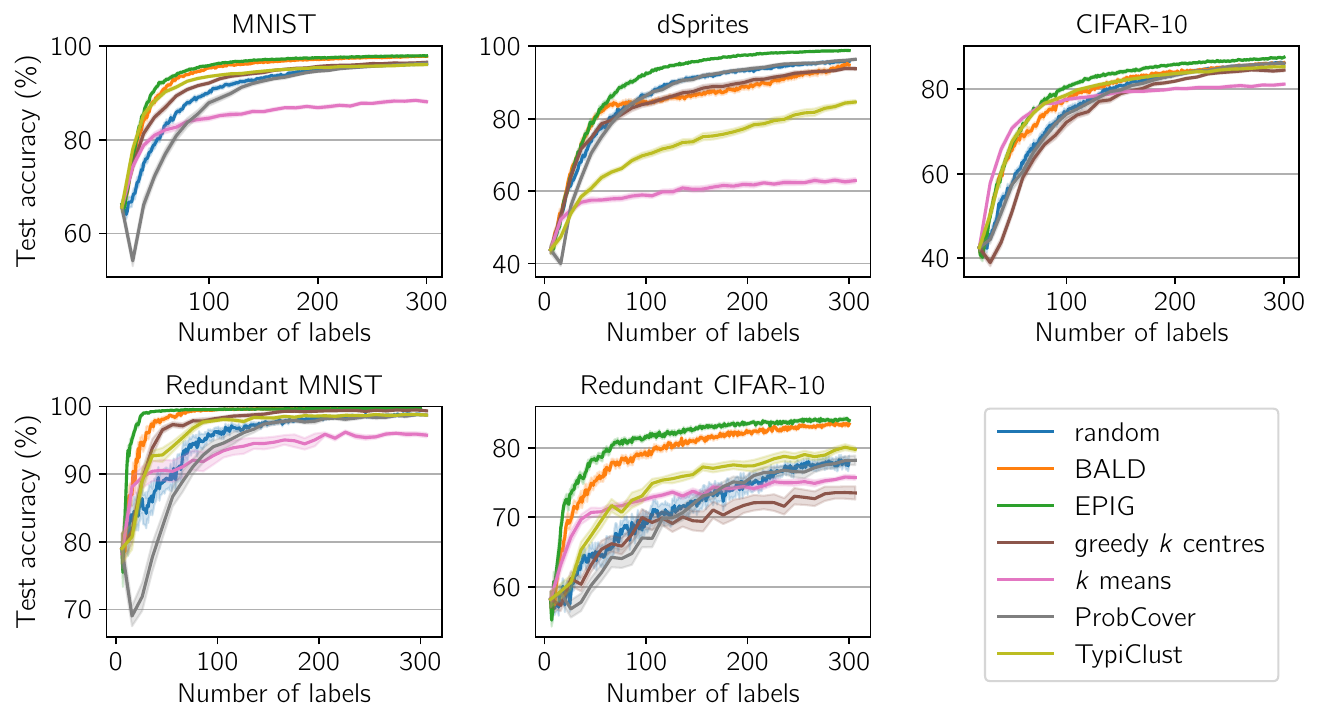}
    \caption{
        EPIG outperforms non-Bayesian acquisition methods in the settings previously covered in \Cref{fig:pretraining_helps,fig:class_imbalance_matters}.
        Unlike in \Cref{fig:imagenet_bayesian}, BADGE and maximum entropy are not shown here because the former does not by default work with random forests \citep{kirsch2023black} and the latter is equivalent to BALD when using random forests.
    }
    \label{fig:extra_baselines}
\end{figure*}
\begin{figure*}[h]
    \centering
    \includegraphics[height=\figureheight]{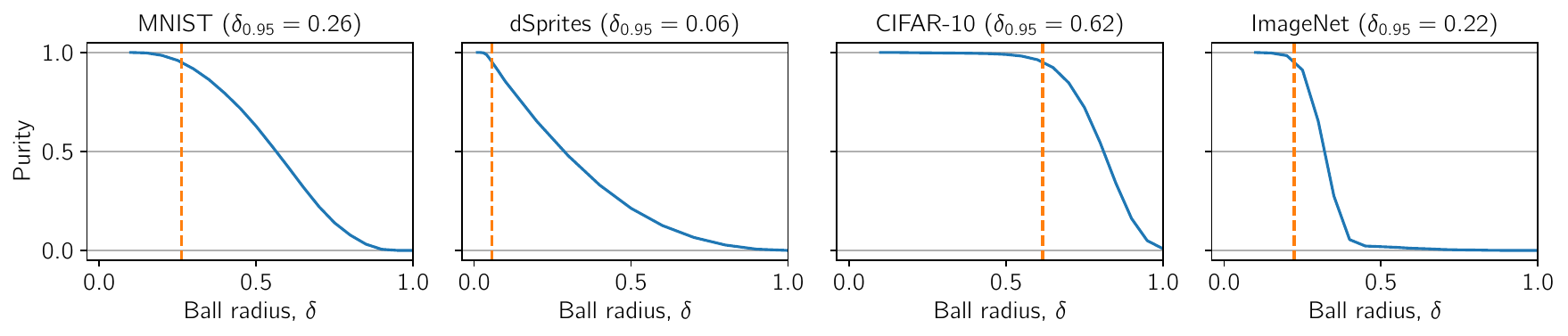}
    \caption{
        ProbCover has a ``ball radius'' parameter that needs to be tuned for the particular representation being used.
        Shown here are the results we used to choose the ball radius to ensure a ``purity'' of 0.95, in line with \citet{yehuda2022active}.
    }
    \label{fig:probcover_tuning}
\end{figure*}
\section{Experiment details}
\label{sec:experiment_details}

\subsection{Experiment in \Cref{sec:problem}}

Using MNIST data \citep{lecun1998gradientbased} and the two-convolutional-layer model architecture from \citet{kirsch2019batchbald}, we investigated the uncertainty estimates produced by four approaches to Bayesian deep learning:

\begin{enumerate}
    \item Deep ensembles \citep{lakshminarayanan2017simple} using an ensemble size of 10.

    \item Laplace approximation \citep{denker1990transforming,mackay1992practical} using a diagonal posterior covariance matrix and a standard Gaussian prior, $\mathcal{N}(0, I)$, when determining the posterior covariance.

    \item Monte Carlo dropout \citep{gal2016dropout} using a dropout rate of 0.5.

    \item Mean-field variational inference \citep{hinton1993keeping} using a Gaussian variational posterior, a standard Gaussian prior, $\mathcal{N}(0, I)$, and the local reparameterisation trick \citep{kingma2015variational} during training.
\end{enumerate}

With each approach we trained models on labelled datasets (randomly sampled and balanced by class) of two different sizes, $N=10$ and $N=10,000$, then we estimated the models' uncertainty on the MNIST test set.
For training we used the same setup as described in \Cref{sec:nn_training}, except with a validation set of 1,000 examples.
For uncertainty estimation we considered the decomposition discussed by \citet{smith2018understanding}:
\begin{align*}
    \underbrace{\mathrm{BALD}(x)}_\mathrm{reducible}
    =
    \underbrace{\entropy{\expectation{p_\phi(\theta)}{p_\phi(y|x,\theta)}}}_\mathrm{total}
    -
    \underbrace{\expectation{p_\phi(\theta)}{\entropy{p_\phi(y|x,\theta)}}}_\mathrm{irreducible}
    .
\end{align*}
While deep ensembles permit exact computation of the required expectations over $p_\phi(\theta)$, the other approaches require Monte Carlo estimation of those expectations, for which we used 1,000 parameter samples.

\subsection{Experiments in \Cref{sec:experiments}}

\subsubsection{Data}

We focused on four datasets: MNIST \citep{lecun1998gradientbased}, dSprites \citep{matthey2017dsprites}, CIFAR-10 \citep{krizhevsky2009learning} and ImageNet (ILSVRC2012) \citep{russakovsky2014imagenet}.
dSprites is less commonly studied than the others in the context of active learning but appeals because it is simple---the inputs are black-and-white images of symbols with known factors of variation, including the symbol shape (ellipse, heart or square), which we use as the class---yet harder than MNIST for fully supervised models, and a number of pretraining methods have been developed with it in mind \citep{burgess2017understanding,chen2018isolating,higgins2017beta,kim2018disentangling}.

In addition to studying the standard versions of MNIST and CIFAR-10, we looked at ``redundant'' versions in which the unlabelled pool remains as before but the task of interest involves classifying inputs from just two classes (1 vs 7 for MNIST; cat vs dog for CIFAR-10).
All the remaining classes were grouped into a third ``other'' class during acquisition and training.
At test time we treated the model as a binary classifier between the two classes of interest.
This test-time setup differs from that used by \citet{bickfordsmith2023prediction} in the original Redundant MNIST experiment; we used it because it more faithfully reflects the assumptions of the task setup.

Our ImageNet experiment focuses on classifying images into eleven superclasses based on the WordNet semantic hierarchy \citep{engstrom2019robustness,miller1995wordnet}: bird (52 classes), boat (6 classes), car (10 classes), cat (8 classes), dog (116 classes), fruit (7 classes), fungus (7 classes), insect (27 classes), monkey (13 classes), truck (7 classes) and other (747 classes).
Contrasting with our experiment based on Redundant MNIST and Redundant CIFAR-10, in this experiment we aimed to simulate scenarios where we cannot be sure that test inputs will belong to the main classes of interest.
So here we defined the test set to include all the examples from the main superclasses but additionally a small number of examples from the ``other'' superclass.
Inducing a mismatch between the pool set and the test set in terms of their class distributions, we set up the pool of candidate inputs for labelling to have an equal number of inputs from each of the 1,000 ImageNet classes.

\subsubsection{Models}

\paragraph{Main semi-supervised models}

Our proposed model setup comprises an encoder and a prediction head.
We tailored our choice of encoder architecture and pretraining technique to each dataset, mostly based on the performance of publicly available model checkpoints in evaluations using randomly acquired data.
For MNIST we used the two-convolutional-layer architecture from \citet{kirsch2019batchbald}, with dropout disabled, and SimCLR pretraining \citep{chen2020simple}; for dSprites, the architecture from \citet{burgess2017understanding} and $\beta$-TCVAE pretraining \citep{chen2018isolating}; for CIFAR-10, the ResNet-18 architecture from \citet{he2015deep} and MoCo v2 pretraining \citep{chen2020improved}; for ImageNet, the ViT-B/4 architecture from \citet{dosovitskiy2021image} and MSN pretraining \citep{assran2022masked}.
In our choice of prediction heads we sought a simple off-the-shelf option.
For MNIST, dSprites and CIFAR-10 we therefore used a random forest \citep{breiman2001random} configured using the default settings in Scikit-learn \citep{pedregosa2011scikitlearn}.
This prediction head performed poorly on ImageNet, so there we used a dropout-enabled fully connected neural network with a dropout rate of 0.1 and three hidden layers of 128 units.

\paragraph{Fully supervised models}

We chose fully supervised models to match the size of the semi-supervised models.
For MNIST we used the two-convolutional-layer model from \citet{kirsch2019batchbald} with their dropout rate of 0.5; for dSprites, the encoder from \citet{burgess2017understanding} combined with a stack of dropout-enabled fully connected layers (with a dropout rate of 0.05); for CIFAR-10, a ResNet-18 modified to incorporate two additional dropout-enabled fully connected layers (with a dropout rate of 0.5) before the final layer, as used by \citet{kirsch2021test}.

\paragraph{Alternative semi-supervised models}

In order to investigate the sensitivity of BALD and EPIG acquisition to changes in the semi-supervised model, we looked at alternative model setups for MNIST.
For the encoder we tried using the same architecture as we used for dSprites but trained as part of a standard variational autoencoder \citep{kingma2014auto,rezende2014stochastic}.
For the prediction head we tried a fully connected neural network with one hidden layer of 128 units, using Laplace approximation to infer a parameter distribution (for this we used a standard Gaussian prior, $\mathcal{N}(0, I)$, and a diagonal, tempered posterior \citep{aitchison2021statistical}, with tempering implemented by raising the likelihood term to a power of $\dim(\theta_h)$, the parameter count of the prediction head).

\subsubsection{Active learning}

We initialised the training dataset by sampling two examples from each class, then we ran the loop described in \Cref{sec:background} until the training-label budget (1,000 for ImageNet; 300 for the other datasets) was used up.
We evaluated a number of acquisition methods: random, BALD \citep{houlsby2011bayesian}, EPIG \citep{bickfordsmith2023prediction}, maximum entropy \citep{settles2008analysis}, BADGE \citep{ash2020deep}, greedy $k$ centres \citep{sener2018active}, $k$ means \citep{pourahmadi2021simple}, ProbCover \citep{yehuda2022active} and TypiClust \citep{hacohen2022active}.
To estimate BALD and EPIG we used the same setup as \citet{bickfordsmith2023prediction}: we used 100 realisations of $\theta_h$ (for random forests this meant using the individual trees; otherwise this meant sampling from the parameter distribution) and 100 samples of $x_*$ from a finite set of unlabelled inputs representative of the downstream task.
After each time the model was trained we evaluated its predictive accuracy on the test dataset.
We ran the entire active-learning process with each acquisition function 20 times with different random-number-generator seeds, then plotted the test accuracy (mean $\pm$ standard error) as a function of the size of the training dataset.

\subsubsection{Training neural-network prediction heads}
\label{sec:nn_training}

We used a loss function comprising the negative log likelihood (NLL) of the model parameters on the training data along with an $l_2$ regulariser on the parameters.
We ran up to 50,000 steps of gradient-based optimisation with a learning rate of 0.01.
To mitigate overfitting we recorded the NLL on a small validation set (roughly a fifth of the size of the training-label budget), stopped training if this NLL did not decrease for more than 5,000 consecutive steps, then restored the model parameters to the setting that gave the lowest validation-set NLL.
\section{Resources}
\label{sec:resources}

\subsection{Software}

\begin{table}[h]
    \centering
    \scriptsize
    \begin{tabular}{llll}
        \toprule
        Project           & Citation                         & License        & URL                                              \\
        \midrule
        BatchBALD Redux   & \citet{kirsch2019batchbald}      & Apache 2.0     & \shorturl{github.com/blackhc/batchbald_redux}    \\
        Bayesianize       & \citet{li2020bayesianize}        & MIT            & \shorturl{github.com/microsoft/bayesianize}      \\
        Disentangling VAE & \citet{dubois2019disentangling}  & MIT            & \shorturl{github.com/yanndubs/disentangling-vae} \\
        Faiss             & \citet{johnson2019billion}       & MIT            & \shorturl{github.com/facebookresearch/faiss}     \\
        Hydra             & \citet{yadan2019hydra}           & MIT            & \shorturl{github.com/facebookresearch/hydra}     \\
        Jupyter           & \citet{kluyver2016jupyter}       & BSD (3-clause) & \shorturl{jupyter.org}                           \\
        Laplace           & \citet{daxberger2021laplace}     & MIT            & \shorturl{github.com/aleximmer/laplace}          \\
        Lightly           & \citet{susmelj2020lightly}       & MIT            & \shorturl{github.com/lightly-ai/lightly}         \\
        Matplotlib        & \citet{hunter2007matplotlib}     & PSF (modified) & \shorturl{matplotlib.org}                        \\
        MSN               & \citet{assran2022masked}         & CC BY-NC 4.0   & \shorturl{github.com/facebookresearch/msn}       \\
        NumPy             & \citet{harris2020array}          & BSD (3-clause) & \shorturl{numpy.org}                             \\
        Pandas            & \citet{mckinney2010data}         & BSD (3-clause) & \shorturl{pandas.pydata.org}                     \\
        PyTorch           & \citet{paszke2019pytorch}        & BSD-style      & \shorturl{pytorch.org}                           \\
        Robustness        & \citet{engstrom2019robustness}   & MIT            & \shorturl{github.com/madrylab/robustness}        \\
        Scikit-learn      & \citet{pedregosa2011scikitlearn} & BSD (3-clause) & \shorturl{scikit-learn.org}                      \\
        SciPy             & \citet{virtanen2020scipy}        & BSD (3-clause) & \shorturl{scipy.org}                             \\
        solo-learn        & \citet{dacosta2022sololearn}     & MIT            & \shorturl{github.com/vturrisi/solo-learn}        \\
        tqdm              & \citet{dacostaluis2019tqdm}      & MPL            & \shorturl{github.com/tqdm/tqdm}                  \\
        \bottomrule
    \end{tabular}
    \caption{
        This work builds on a number of software projects.
    }
    \label{tab:software}
\end{table}

\subsection{Compute}

We used an internal cluster equipped with Nvidia GeForce RTX 2080 Ti (12GB) and Nvidia Titan RTX (24GB) chips.
Our cheapest run (dSprites data, semi-supervised model, random acquisition) took 15 minutes on an Nvidia GeForce RTX 2080 Ti.
Our most expensive run (CIFAR-10 data, fully supervised model, EPIG acquisition) took 200 hours on an Nvidia Titan RTX.

\end{document}